\newcolumntype{L}[1]{>{\raggedright\let\newline\\\arraybackslash\hspace{0pt}}m{#1}}
\newcolumntype{Y}{>{\centering\arraybackslash}X}
\newcolumntype{S}{>{\hsize=.4\hsize}X}
\newcolumntype{s}{>{\raggedleft\arraybackslash\hsize=.3\hsize}X}
\newcolumntype{M}{>{\RaggedRight\arraybackslash\hspace{0pt}}X}
\newcommand*\samethanks[1][\value{footnote}]{\footnotemark[#1]}
\newlength{\foursubht}
\newsavebox{\foursubbox}
\begin{document}

\title{Optimizing the Dice Score and Jaccard Index for Medical Image Segmentation: Theory \& Practice}
\titlerunning{Optimizing Dice and Jaccard for Medical Image Segmentation}

\author{
    Jeroen Bertels\inst{1}\thanks{J.B. and T.E. have contributed equally to this work.} \and
    Tom Eelbode\inst{1}\samethanks \and
    Maxim Berman\inst{1} \and
    Dirk Vandermeulen\inst{1} \and
    Frederik Maes\inst{1} \and
    Raf Bisschops\inst{2} \and
    Matthew B.\ Blaschko\inst{1}
}

\authorrunning{J. Bertels, T. Eelbode et al.}

\institute{\vspace{1cm}}
\institute{
    ESAT, Center for Processing Speech and Images, KU Leuven, Belgium  \and
    Gastroenterology and hepatology, UZ Leuven, Belgium
}

\maketitle

\begin{abstract}

The Dice score and Jaccard index are commonly used metrics for the evaluation of segmentation tasks in medical imaging. Convolutional neural networks trained for image segmentation tasks are usually optimized for (weighted) cross-entropy. This introduces an adverse discrepancy between the learning optimization objective (the loss) and the end target metric. Recent works in computer vision have proposed soft surrogates to alleviate this discrepancy and directly optimize the desired metric, either through relaxations (soft-Dice, soft-Jaccard) or submodular optimization (Lovász-softmax). The aim of this study is two-fold. First, we investigate the theoretical differences in a risk minimization framework and question the existence of a weighted cross-entropy loss with weights theoretically optimized to surrogate Dice or Jaccard. Second, we empirically investigate the behavior of the aforementioned loss functions w.r.t.\ evaluation with Dice score and Jaccard index on five medical segmentation tasks. Through the application of relative approximation bounds, we show that all surrogates are equivalent up to a multiplicative factor, and that no optimal weighting of cross-entropy exists to approximate Dice or Jaccard measures. We validate these findings empirically and show that, while it is important to opt for one of the target metric surrogates rather than a cross-entropy-based loss, the choice of the surrogate does not make a statistical difference on a wide range of medical segmentation tasks. 

\keywords{Dice \and Jaccard \and Risk minimization \and Cross-entropy}

\end{abstract}

\section{Introduction}
The Dice score and Jaccard index have become some of the most popular performance metrics in medical image segmentation \cite{Kamnitsas2017,Ronneberger2015a,brats2018,isles2017,isles2018}.
Zijdenbos et al.\ were among the first to suggest the Dice score for medical image analysis by evaluating the quality of automated white matter lesion segmentations \cite{Zijdenbos1994}. 
In scenarios with large class imbalance, with an excessive number of (correctly classified) background voxels, they show that the Dice score is a special case of the kappa index, a chance-corrected measure of agreement. 
They further note that the Dice score reflects both size and localization agreement, more in line with perceptual quality compared to pixel-wise accuracy.

Risk minimization principle says we should minimize during training time the loss that we will be using to evaluate the performance at test time \cite{Vapnik:1995:NSL:211359}. 
This has motivated the introduction of differentiable approximations for Dice score (e.g. soft Dice \cite{Sudre2017}) and Jaccard index (e.g. soft Jaccard \cite{tarlow2012revisiting,nowozin2014optimal} or its more recent convex extension Lovász-softmax \cite{Berman2018a}) in order to incorporate it into gradient-based training schemes, such as stochastic gradient descent (SGD). These can be used for training segmentation models, including convolutional neural networks (CNNs)~\cite{Sudre2017}. 
Nevertheless, training with the pixel-wise cross-entropy loss, or its weighted variant, remains highly popular, even when the evaluation is performed using the Dice score or Jaccard index \cite{Kamnitsas2017,Chen2017}. In the MICCAI 2018 proceedings, 47 out of 77 learning-based segmentation papers used such a per-pixel loss even though the evaluation was performed with Dice score. 

This raises the question to what extent a loss function has impact on the prediction quality, and whether there are principled reasons for choosing one set of loss functions over another. 
In this work, we consider from a theoretical perspective the relationship between Dice score and Jaccard index, and work out that one approximates the other under risk minimization. 
We further question the existence of a well-weighted cross-entropy loss as a surrogate for Dice or Jaccard. We find an approximation bound between Dice and Jaccard losses, but no such approximation exists for cross-entropy. 
We are able to validate our findings empirically on five medical tasks, finding that all of the metric-sensitive losses are favourable over (weighted) cross-entropy, but that generally no mutual statistical difference can be observed among the former.

\section{Risk minimization with Dice and related similarities}\label{sec:2}

When performing discriminative training of machine learning methods, such as SGD for a CNN~\cite{Goodfellow-et-al-2016}, we are performing risk minimization. 
To learn a mapping $f$ from an observed input $x$ to a hidden variable $y$, empirical risk minimization optimizes the expectation of a loss function over a finite training set:
{
\setlength{\abovedisplayskip}{3pt}
\setlength{\belowdisplayskip}{5pt}
\begin{align}
\arg\min_{f\in\mathcal{F}} \underbrace{\frac{1}{n} \sum_{i=1}^n \ell(f(x_i),y_i)}_{=:\hat{\mathcal{R}}(f)} , 
\end{align}
}
where $\ell$ is a loss function and $\mathcal{F}$ is a function class of interest, e.g.\ the set of functions that can be represented by a neural network with a given topology.
We will denote the bootstrap distribution arising from a sample $\mathcal{S}:=\{(x_i,y_i)\}_{1\leq i \leq n}$ of size $n$ as $P_n$, and we may equivalently denote $\hat{\mathcal{R}}(f) = \mathbb{E}_{(x,y)\sim P_n}[\ell(x,y)]$.

In binary medical image segmentation, $y$ can be thought of as a set of pixels labeled as foreground. It is therefore well defined to consider set theoretic notions such as $y\cap \tilde{y}$ for two different segmentations.  This motivates the use of multiple set theoretic similarity measures  between two segmentations $y$ and $\tilde{y}$ including the Dice score $D$, the Jaccard index $J$, the Hamming similarity $H$, and what we will call the weighted Hamming similarity $H_\gamma$:
\begin{align}
D(y,\tilde{y}) := \frac{2 |y\cap \tilde{y}|}{|y| + |\tilde{y}|},\ 
J(y,\tilde{y}) := \frac{|y\cap \tilde{y}|}{|y\cup \tilde{y}|},\ 
H(y,\tilde{y}) := 1 - \frac{|y\setminus \tilde{y}| + |\tilde{y}\setminus y|}{d},\\ 
H_\gamma(y,\tilde{y}) := 1 - \gamma \frac{|y\setminus \tilde{y}|}{|y|} - (1 - \gamma) \frac{|\tilde{y}\setminus y|}{d - |y|} , \label{eq:hammingbound}
\end{align}
where $d$ denotes the number of pixels and $0\leq \gamma \leq 1$.  We note that all these similarities are between 0 and 1, and that $H_{\gamma}$ generalizes $H$ with equality when $\gamma = \frac{|y|}{d}$. A further important relationship is that between the Jaccard index and the Dice coefficient.  It is well known that 
\begin{align} \label{eq:DiceJaccardMonotonicRelationship}
J(y,\tilde{y}) = \frac{D(y,\tilde{y})}{2-D(y,\tilde{y})} \text{ and } D(y,\tilde{y}) = \frac{2 J(y,\tilde{y})}{1+J(y,\tilde{y})} .
\end{align}
Indeed, in the risk minimization framework for medical image segmentation, there are numerous examples where each of these measures are optimized \cite{pmid30557049,salehi2017tversky,Sudre2017}. 

In risk minimization, we replace a similarity $S : \mathcal{Y} \times \mathcal{Y} \rightarrow [0,1]$ with its corresponding loss $1-S$, and aim at minimizing this loss in expectation.
To train a neural network by backpropagation~\cite{Goodfellow-et-al-2016} it is necessary to replace this value with a differentiable surrogate. 
For the Hamming similarity, cross-entropy loss and other convex surrogates are statistically consistent~\cite{Bartlett2016JASA,Lapin_2016_CVPR}. 
To optimize the weighted Hamming similarity, one may employ weighted loss functions \cite{Ling2010} such as weighted cross entropy. 
Similarly, differentiable surrogates have been proposed both for the Dice score (e.g.\ soft Dice \cite{Sudre2017}) and Jaccard index (e.g.\ soft Jaccard \cite{SoftJaccard2016} and Lovász-softmax \cite{Berman2018a}).
Next, we hereby discuss the absolute and relative approximations between Dice and Jaccard and inspect the existence of an approximation through a weighted Hamming similarity.

\begin{definition}[Absolute approximation]
A similarity $S$ is absolutely approximated by $\tilde{S}$ with error $\varepsilon \geq 0$ if the following holds for all $y$ and $\tilde{y}$:
\begin{align}
    | S(y,\tilde{y}) - \tilde{S}(y,\tilde{y}) | \leq \varepsilon .
\end{align}
\end{definition}
\begin{definition}[Relative approximation]
A similarity $S$ is relatively approximated by $\tilde{S}$ with error $\varepsilon \geq 0$ if the following holds for all $y$ and $\tilde{y}$:
\begin{align}
    \frac{\tilde{S}(y,\tilde{y})}{1+\varepsilon} \leq S(y,\tilde{y}) \leq \tilde{S}(y,\tilde{y})(1+\varepsilon) .
\end{align}
\end{definition}
We note that both notions of approximation are symmetric in $S$ and $\tilde{S}$.

\begin{proposition}\label{thm:DiceJaccardApproximation}
$J$ and $D$ approximate each other with relative error of $1$ and absolute error of $3 - 2\sqrt{2}=0.17157\dots$. \end{proposition}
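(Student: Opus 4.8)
The plan is to reduce both claims to a single-variable analysis. Since the monotonic relationship \eqref{eq:DiceJaccardMonotonicRelationship} gives $J = \frac{D}{2-D}$, and every fixed pair $(y,\tilde{y})$ produces a value $D := D(y,\tilde{y}) \in [0,1]$, both the ratio and the difference of the two similarities become functions of the single scalar $D$ ranging over $[0,1]$. First I would record the sign information: because $2 - D \ge 1$ we have $J = \frac{D}{2-D} \le D$, so $D$ always dominates $J$. This fixes the sign of $D - J$ and tells us which of the two inequalities is binding in each approximation.

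For the relative approximation I would divide through by $D$ and study the ratio $\frac{J}{D} = \frac{1}{2-D}$, which sweeps the interval $[\tfrac{1}{2}, 1]$ as $D$ runs over $[0,1]$. The upper relative bound is then automatic, since the ratio never exceeds $1$, while the lower bound $\frac{1}{1+\varepsilon} \le \frac{1}{2-D}$ is tightest in the limit $D \to 0$, forcing $1 + \varepsilon \ge 2$ and hence $\varepsilon = 1$. Because the relative-approximation definition is symmetric in the two arguments, swapping the roles of $D$ and $J$ should yield the identical constant, which I would check as a consistency test.

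For the absolute approximation I would write the gap in closed form, $D - J = D - \frac{D}{2-D} = \frac{D(1-D)}{2-D} =: g(D)$, and maximize it over $[0,1]$. Differentiating, the numerator of $g'(D)$ simplifies to $D^2 - 4D + 2$, whose roots are $2 \pm \sqrt{2}$; only $D = 2 - \sqrt{2}$ lies in $[0,1]$. Substituting back, with $1 - D = \sqrt{2} - 1$ and $2 - D = \sqrt{2}$, gives $g(2-\sqrt{2}) = \frac{3\sqrt{2}-4}{\sqrt{2}} = 3 - 2\sqrt{2}$. Since the gap vanishes at both endpoints $D = 0$ and $D = 1$, this interior critical value is the global maximum, establishing the absolute error $3 - 2\sqrt{2}$.

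The main obstacle is precisely this last computation: it is the only step demanding genuine calculus rather than algebraic manipulation, and the constant $3 - 2\sqrt{2}$ is not evident a priori, so the care lies in differentiating $g$ correctly and in rationalizing $g(2-\sqrt{2})$ without arithmetic slips.
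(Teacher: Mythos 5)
Your proposal is correct and follows essentially the same route as the paper: both reduce to a single variable via $J = \frac{D}{2-D}$, obtain the relative error $\varepsilon = 1$ from the limit $D \to 0$, and obtain the absolute error $3-2\sqrt{2}$ by first-order conditions at $D = 2-\sqrt{2}$. Your additions (the sign analysis fixing which inequality binds, and the endpoint check confirming the interior critical point is a global maximum) merely make explicit what the paper leaves implicit.
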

\begin{proof}
The relative error between $J$ and $D$ is given by (cf.\ Equation~\eqref{eq:DiceJaccardMonotonicRelationship})
\begin{align}
    \min_{\varepsilon\geq 0} \varepsilon ,\ 
    \text{s.t. }
    x \leq \frac{x}{2-x} (1+\varepsilon),  \ \forall \ 0\leq x \leq 1. \\
    x \leq \frac{x}{2-x}(1+\varepsilon)
    \implies 1-x \leq \varepsilon \implies \varepsilon = 1 .
\end{align}
The absolute error between $J$ and $D$ is given by
\begin{align}
   \varepsilon = \sup_{0\leq x \leq 1} \left| x - \frac{x}{2-x} \right| = 3 - 2\sqrt{2} ,
\end{align}
which can be verified straightforwardly by first order conditions:
\begin{align}
\frac{\partial}{\partial x} \left( x - \frac{x}{2-x} \right)
= 0 \implies (2-x)^2 - 2 = 0 \implies x =
2 - \sqrt{2}. \rlap{$\quad \qed$}
\end{align}
\end{proof}

\begin{proposition}
$D$ and $H_{\gamma}$ (where $\gamma$ is chosen to minimize the approximation factor between $D$ and $H_{\gamma}$) do not relatively approximate each other,  and absolutely approximate each other with an error of $1$. We note that the absolute error bound is trivial as $D$ and $H_{\gamma}$ are both similarities in the range $[0,1]$.
\end{proposition}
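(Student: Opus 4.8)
The plan is to treat the two assertions separately: the absolute bound is essentially immediate, so the real work is to rule out a relative approximation for \emph{every} admissible $\gamma \in [0,1]$, which in turn shows that the approximation factor stays infinite even for the $\gamma$ that minimizes it.

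For the relative statement, I would first note that a finite relative error $\varepsilon$ forces $D$ and $H_\gamma$ to vanish together and to keep a bounded ratio wherever both are positive; equivalently, it suffices to exhibit a single family of segmentations along which $H_\gamma/D$ is unbounded, since then the inequality $H_\gamma \le D(1+\varepsilon)$ cannot hold for any finite $\varepsilon$. The witness I would use is a gross over-segmentation: fix $|y|=1$, let $\tilde{y} \supseteq y$ with $|\tilde{y}| = k$, and take the image size $d$ much larger than $k$. A direct computation then gives
\begin{align}
D(y,\tilde{y}) = \frac{2}{k+1}, \qquad H_\gamma(y,\tilde{y}) = 1 - (1-\gamma)\frac{k-1}{d-1},
\end{align}
so sending $d \to \infty$ and then $k \to \infty$ drives $H_\gamma \to 1$ while $D \to 0$. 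As this holds for each fixed $\gamma$, no single $\gamma$ (in particular not the minimizing one) yields a finite relative error.

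The step I expect to be the main obstacle is the endpoint $\gamma = 1$. For $\gamma < 1$ there is a cleaner witness---take $\tilde{y} = \emptyset$ with $y \neq \emptyset$, so that $D = 0$ while $H_\gamma = 1 - \gamma > 0$, an exact zero-versus-positive mismatch that defeats relative approximation outright. This shortcut fails at $\gamma = 1$, because $H_1$ depends only on the false-negative ratio $|y\setminus\tilde{y}|/|y|$ and ignores $\tilde{y} \setminus y$; whenever $D = 0$ (i.e.\ $y \cap \tilde{y} = \emptyset$) one necessarily has $H_1 = 0$ as well, so no lone zero is available. This is exactly why I would route the argument through the over-segmentation family above: it forces the blow-up through the ratio $H_\gamma/D$ rather than through a single vanishing value, and it covers $\gamma = 1$ uniformly with all other $\gamma$.

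Finally, the absolute bound is trivial: since $D, H_\gamma \in [0,1]$ we have $|D - H_\gamma| \le 1$. To see that $1$ is tight I would reuse the same over-segmentation family, along which $|D - H_\gamma| \to 1$; for $\gamma = 0$ the pair $\tilde{y} = \emptyset$, $y \neq \emptyset$ even attains $|D - H_0| = 1$ exactly. Hence the absolute approximation error equals $1$ for the minimizing $\gamma$ as well.
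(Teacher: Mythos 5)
Your proof is correct and takes essentially the same approach as the paper: both exhibit an over-segmentation family with no false negatives, where the false positives are large relative to a vanishingly small object yet negligible relative to the background, so that $H_\gamma \to 1$ uniformly in $\gamma$ while $D \to 0$, ruling out any finite relative error for every $\gamma$ (including the minimizing one); the paper couples the scales through a single parameter ($|y\cap\tilde{y}| = \alpha^2 d$, $|\tilde{y}\setminus y| = \alpha d$, $\alpha \to 0$) where you use iterated limits $d \to \infty$ then $k \to \infty$. The same family yields tightness of the trivial absolute bound of $1$ in both treatments, so the two arguments match in substance.
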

\begin{proof}
For relative error, consider the case that $|y\setminus \tilde{y}|=0$, $|\tilde{y}\setminus y| = \alpha d$, and $|y \cap \tilde{y}| = \alpha^2 d$ for some $0\leq \alpha<\frac{\sqrt{5}-1}{2}$:
\begin{align}
    \inf_{\gamma} \sup_{y, \tilde{y}} 1 - \gamma \frac{|y \setminus \tilde{y}|}{|y|} - (1-\gamma)\frac{|\tilde{y} \setminus y|}{d-|y|} - \frac{2|y \cap \tilde{y}|}{|y \triangle \tilde{y}| + 2|y \cap \tilde{y}|} (1+\varepsilon) \leq 0  \\
    \implies \sup_{0\leq\alpha<\frac{\sqrt{5}-1}{2}} 1 - \frac{\alpha }{1- \alpha^2 } -  \frac{2\alpha^2 }{\alpha  + 2\alpha^2 } (1+\varepsilon) \leq 0
\end{align}
If we let $\alpha \rightarrow 0$, it must be the case that $\varepsilon \rightarrow \infty$. To show that the absolute approximation error is 1, we similarly take
\begin{align}
    \lim_{\alpha\rightarrow 0} 1 - \frac{\alpha }{1- \alpha^2 } -  \frac{2\alpha }{1  + 2\alpha} = 1. \rlap{$\qquad \qed$}
\end{align}
\end{proof}

\begin{corollary}\label{thm:DiceHammingNoApproximation}
$D$ and $H$ do not relatively approximate each other, and absolutely approximate each other with an error of $1$.  
\end{corollary}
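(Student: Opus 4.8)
The plan is to derive this directly from the preceding Proposition, exploiting the fact noted earlier that $H$ is the special case of $H_\gamma$ obtained by setting $\gamma = |y|/d$. Concretely, I would reuse verbatim the family of counterexamples constructed there, namely $|y\setminus\tilde y| = 0$, $|\tilde y\setminus y| = \alpha d$ and $|y\cap\tilde y| = \alpha^2 d$ for $0\le\alpha<\frac{\sqrt5-1}{2}$ (the range guaranteeing $|y\cup\tilde y|\le d$), and simply evaluate $H$ rather than the $\gamma$-optimized $H_\gamma$ on this family.

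The key computation is short. On this family one has $|y| = \alpha^2 d$, so $H(y,\tilde y) = 1 - (|y\setminus\tilde y| + |\tilde y\setminus y|)/d = 1-\alpha$, while $D(y,\tilde y) = 2\alpha^2 d/(|y|+|\tilde y|) = 2\alpha/(1+2\alpha)$. For the relative statement I would observe that the ratio $H/D = (1-\alpha)(1+2\alpha)/(2\alpha)$ diverges as $\alpha\to0$, so the smallest $\varepsilon$ satisfying $H \le D(1+\varepsilon)$ must tend to infinity; hence no finite relative error bounds the two. For the absolute statement I would take the same limit $\alpha\to0$, giving $|H-D|\to|1-0|=1$; since $H$ and $D$ both lie in $[0,1]$ the absolute error can never exceed $1$, so this value is tight (and, as the statement already remarks for the $H_\gamma$ case, the bound itself is trivial).

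The one point that needs care --- and the reason this is genuinely a corollary rather than an immediate logical consequence --- is that the preceding Proposition only rules out a relative approximation for a \emph{constant} $\gamma$, whereas $H$ corresponds to the \emph{input-dependent} choice $\gamma = |y|/d$, which could a priori approximate $D$ better than any fixed $\gamma$. The feature that rescues the argument is that the chosen counterexamples satisfy $|y\setminus\tilde y|=0$, which annihilates the term $\gamma\,|y\setminus\tilde y|/|y|$ for every $\gamma$; the only surviving $\gamma$-dependence sits in $(1-\gamma)|\tilde y\setminus y|/(d-|y|)$, and substituting $\gamma=\alpha^2$ there reproduces exactly the blow-up found for the optimal constant $\gamma$. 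I expect this verification --- confirming that the adaptive weighting of $H$ does not escape the degenerate regime $\alpha\to0$ --- to be the only real obstacle; everything else is the arithmetic above.
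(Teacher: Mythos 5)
Your proof is correct and takes the same route the paper intends: the paper states this result as an unproved corollary of the preceding Proposition, whose counterexample family ($|y\setminus\tilde y|=0$, $|\tilde y\setminus y|=\alpha d$, $|y\cap\tilde y|=\alpha^2 d$, $\alpha\to 0$) you reuse verbatim, specialized to $\gamma=|y|/d=\alpha^2$, yielding $H=1-\alpha\to 1$ against $D=2\alpha/(1+2\alpha)\to 0$. Your observation that the Proposition's statement alone does not formally imply the corollary --- because $H$ corresponds to an input-dependent $\gamma$ rather than any fixed one, and the blow-up requires varying $y$ --- identifies exactly the point the paper glosses over, and your check that $|y\setminus\tilde y|=0$ kills the $\gamma$-term for every choice of $\gamma$ closes that gap correctly.
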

From these bounds, we see that a (weighted) binary loss can be an arbitrarily bad approximation for Dice when segmenting small objects, while the Jaccard loss gives multiplicative and additive approximation guarantees. Furthermore,  Eq.~\eqref{eq:DiceJaccardMonotonicRelationship} implies that $1-D(y,\tilde{y}) \leq 1-J(y,\tilde{y}) \implies \mathbb{E}_{(x,y)\sim P_n}[1-D(y,f(x))] \leq \mathbb{E}_{(x,y)\sim P_n}[1-J(y,f(x))]$ and optimization with risk computed with the Jaccard loss minimizes an upper bound on risk computed with the Dice loss.  Similarly setting $\varphi(x) =  2x/(1+x)$, by application of Jensen's inequality we arrive at $\mathbb{E}_{(x,y)\sim P_n}[1-J(y,f(x))] =\mathbb{E}_{(x,y)\sim P_n}[\varphi(1-D(y,f(x)))] \leq \varphi(\mathbb{E}_{(x,y)\sim P_n}[1-D(y,f(x))])$ and optimizing the Dice loss minimizes an upper bound on the Jaccard loss as $\varphi$ is a monotonic function over $[0,1]$.

\section{Empirical setup}
To test the aforementioned properties empirically, we investigate the performance of segmentation networks trained with different loss functions: cross-entropy (CE), weighted cross-entropy (wCE), soft Dice (sDice), soft Jaccard (sJaccard), and Lovász-sigmoid. 
We validate by cross-validation on five medical binary segmentation tasks. 
Three tasks are publicly available 3D datasets: BRATS 2018 (limited to whole tumor segmentation \cite{brats2018}; BR18, 285 images), ISLES 2017 (follow-up stroke lesion segmentation \cite{isles2017}; IS17, 43 images) and ISLES 2018 (acute stroke lesion segmentation \cite{isles2018}; IS18, 94 images). 
Furthermore, we expand the empirical setup with two in-house 2D datasets: lower-left third molar segmentation from panoramic dental radiographs (MO17, 400 images) and segmentation of colorectal polyps from colonoscopy images (PO18, 1166 images). \\
\textbf{Network architectures and preprocessing.}
For BR18, IS17 and IS18 we implement a U-Net-like~\cite{Ronneberger2015a} architecture with 3D convolutions, starting from a top-ranked implementation during last year’s BRATS challenge~\cite{Isensee2018} with less filters and an encoder depth of 7 layers. For MO17 the same architecture with 2D convolutions is used. For PO18 a VGG16 backbone architecture with atrous convolutions and pretrained on ImageNet~\cite{Chen2016} is used.
We use all image modalities available in each dataset as input, excluding perfusion data for IS17 and IS18. In order to fit memory, these inputs are resized and cropped. 
Data augmentation consisted of Gaussian noise, translations, flips, and in-plane rotations. \\
\textbf{Training procedure.}
We perform an initial training of the CNNs with cross-entropy loss. 
We use Adam~\cite{kingma2014adam} with an initial learning rate of $10^{-3}$ for the randomly initialized networks, and $10^{-4}$ for the ImageNet-initialized network. This learning rate is decreased when the validation loss stagnates. We stop the training when the validation loss starts increasing. Batch sizes are $40$ for MO17, $16$ for PO18, and 4 for all public datasets.
After initial convergence with cross-entropy, we continue training using one of the five different loss functions: CE, wCE, sDice, sJaccard and Lovász. 
For wCE, theory suggests that no optimal approximation w.r.t. Dice or Jaccard can be derived before training (see Sect. \ref{sec:2}). 
To set the weights, we therefore resort to the common heuristic of balancing foreground and background equally~\cite{Sudre2017}.
Thus, the weight applied to the foreground class is $1/(2p)$ and the weight applied to the background class is $1/(2-2p)$, with $p$ the foreground prior. 
We use the same optimization procedure as described for the initial training, with an initial learning rate of $10^{-3}$ for MO17 and $10^{-4}$ for all other datasets, which lead to appropriate convergence.

\section{Results and discussion}
\begin{table}[!htbp]
    \caption{Dice scores and Jaccard indexes obtained for each dataset with the different losses.
    Values in italic point to a significant lower result compared to each of the metric-sensitive losses. 
    Underlined values point to a significant lower result within the two groups of losses considered: the group of CE and wCE losses, and the group of metric-sensitive losses. 
    Values in bold point to a significant better result compared to all other losses. Values in parentheses are dataset sizes.}
    \begin{tabularx}{\linewidth}{lsXYYYYY}
    \toprule
         & Dataset & \multicolumn{1}{r}{\lapbox[\width]{1em}{\emph{loss} $\rightarrow$}} & CE &                wCE &               sDice &               sJaccard &               Lovász \\
    \midrule
\parbox[t]{7mm}{\centering\multirow{5}{*}{\rotatebox[origin=c]{90}{Dice score}}}
&    BR18  &  &  \textit{0.768} &  \underline{\textit{0.735}} &  0.823 &  0.823 &  \textbf{0.827} \\
&    IS17  &
    &   \underline{\textit{0.260}} &  0.311 &  0.331 &  0.321 &  0.305 \\
&    IS18  & &  \textit{0.463} &  \textit{0.474} &  \textbf{0.538} &  0.528 &  \underline{0.508} \\
&    MO17  & &   0.930 &   \underline{\textit{0.860}} &  0.932 &  0.931 &  0.932 \\
&    PO18  &  &  \textit{0.635} &  \underline{\textit{0.602}} &  0.656 &   0.651 &  0.649 \\
    \midrule
    \parbox[t]{7mm}{\centering\multirow{5}{*}{\rotatebox[origin=c]{90}{Jaccard index}}}
      & BR18  & &  \textit{0.654} &  \underline{\textit{0.602}} &  0.717 &   0.720 &  0.722 \\

        & IS17 &  &   \underline{\textit{0.177}} &   0.212 &  0.227 &  0.217 &  0.204 \\
         
        & IS18 &  &  \textit{0.345} &  \textit{0.344} &  \textbf{0.407} &   0.399 &  \underline{0.382} \\
        & MO17 & &  0.873 &  \underline{\textit{0.769}} & 0.877 &  0.875 &  0.877 \\
        & PO18 & &  \textit{0.541} &  \underline{\textit{0.488}} &  0.559 &  0.554 &  0.553 \\
    \bottomrule
\end{tabularx}
    \label{tab:results}
\end{table}
In the following discussion, we distinguish between two groups of losses. First, CE and wCE losses, which are surrogates for the (weighted) Hamming loss.
Second, sDice, sJaccard and Lovász losses, which are surrogates either for the Dice score or Jaccard index, and which we group as~\emph{metric-sensitive losses}. Table~\ref{tab:results} lists the average Dice scores and Jaccard indexes obtained after five-fold cross-validation for each dataset and loss under study. 
For each fold, we choose the best performing model w.r.t. the validation loss. 
We perform a pairwise non-parametric significance test (bootstrapping) with a p-value of 0.05 to assess inferiority or superiority between pairs of optimization methods. \\
\textbf{Equivalence of $J$ and $D$.} 
The theory suggests an equivalence between Dice and Jaccard metrics~(Prop.~\ref{thm:DiceJaccardApproximation}). 
This equivalence appears in our results: in particular, we found the rankings of the performance of the different losses to be the same in terms of Dice score and in terms of Jaccard index. \\
\textbf{Performance of the surrogates.} 
It is clear that CE and wCE lead to lower Dice scores and Jaccard indexes than the metric-sensitive losses (highlighted in italic). 
Only for MO17 does CE lead to similar performance compared to the metric-sensitive losses, likely due to a more uniform distribution of foreground and background pixels for this dataset.
This trend was expected due to the theoretical divergence between cross-entropy losses and the metric-sensitive losses and holds with current works optimizing Dice or Jaccard measures directly via their surrogates. Moreover, we found in general no statistically significant difference within the group of metric-sensitive losses w.r.t. Dice or Jaccard. 
This further confirms our theoretical findings and leaves the researcher a free choice. \\
\textbf{Weighting of cross-entropy.} 
We note that wCE is generally performing poorly compared to CE (inferior performances are underlined).
A better choice of weights might lead to a better performance of wCE.
However, it is clear from our results that the weighting is highly task-dependent.
Finding a better weighting is therefore non-trivial, compared to using one of the metric-sensitive losses. 
Moreover, as highlighted in our subsequent scale-specific study, wCE does yield a better performance within some restricted ranges of object scales. 
In accordance with theory, it is likely that no single weighting would yield appropriate surrogates to the target metrics across all datasets and scales. \\
\textbf{Scale-specific study.\label{par:scale-sensitive}}
In general, a segmentation dataset contains objects of variable size. 
It is generally assumed that Dice or Jaccard-sensitive losses have most impact for refining the segmentations of samples of small size, thanks to their invariance to scale, which cross-entropy does not have~\cite{Berman2018a}. 
The dependence of the approximation bound on the Hamming loss in Eq.~\eqref{eq:hammingbound} on the number of positive pixels in the ground truth $|y|$ also points towards a loss of segmentation accuracy in terms of Dice score in the small-sample regime when optimizing with Hamming loss or cross-entropy. 
We study this dependence in Fig.~\ref{fig:dice_size}, showing the average Dice scores as a function of the ground truth object size for the different optimization methods. 
We found that the applicability of metric-sensitive losses goes beyond the small-size regime, and that it is possible for CE to perform poorly across almost all scales. 
This is most evident in BR18 and IS18; even in the other datasets, the cross-entropy curve is dominated by other optimization methods. 
Furthermore, while wCE improves on CE on some datasets and area ranges, it can also vastly underperform the metric-sensitive losses as in BR18 and MO17, further indicating that a simple re-weighting of cross-entropy is not sufficient to capture the target metric across all object scales and datasets.
\begin{figure}[t]%
\sbox\foursubbox{%
  \resizebox{\textwidth}{!}{%
    \includegraphics[height=3cm]{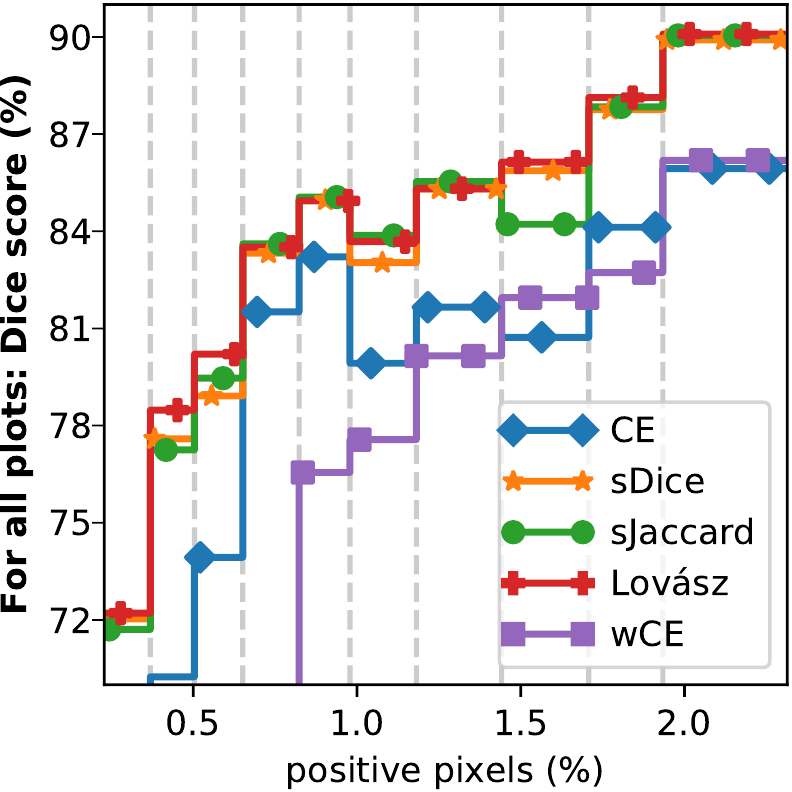}%
    \includegraphics[height=3cm]{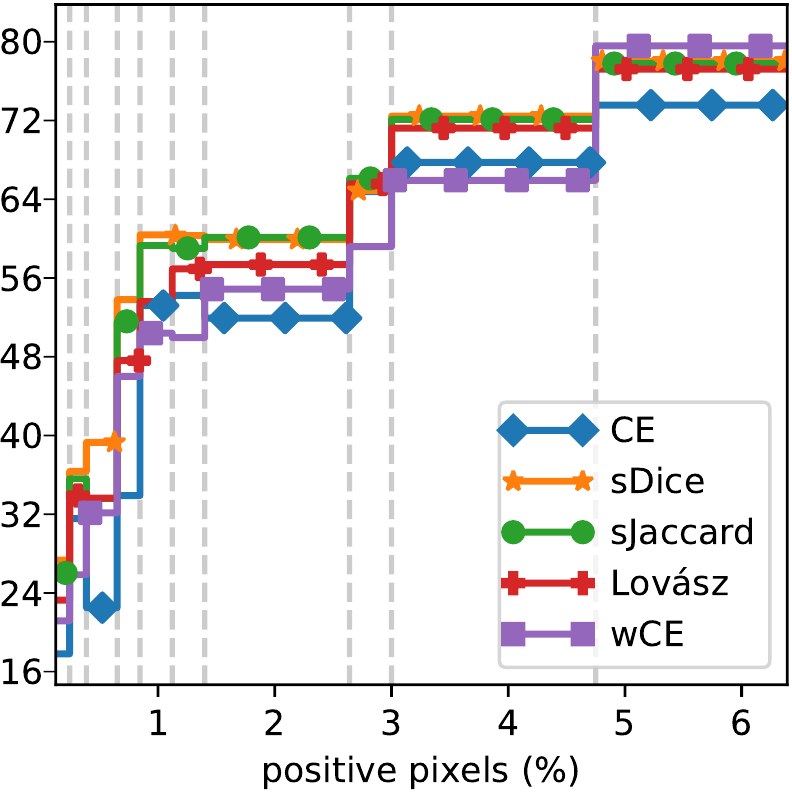}%
    \includegraphics[height=3cm]{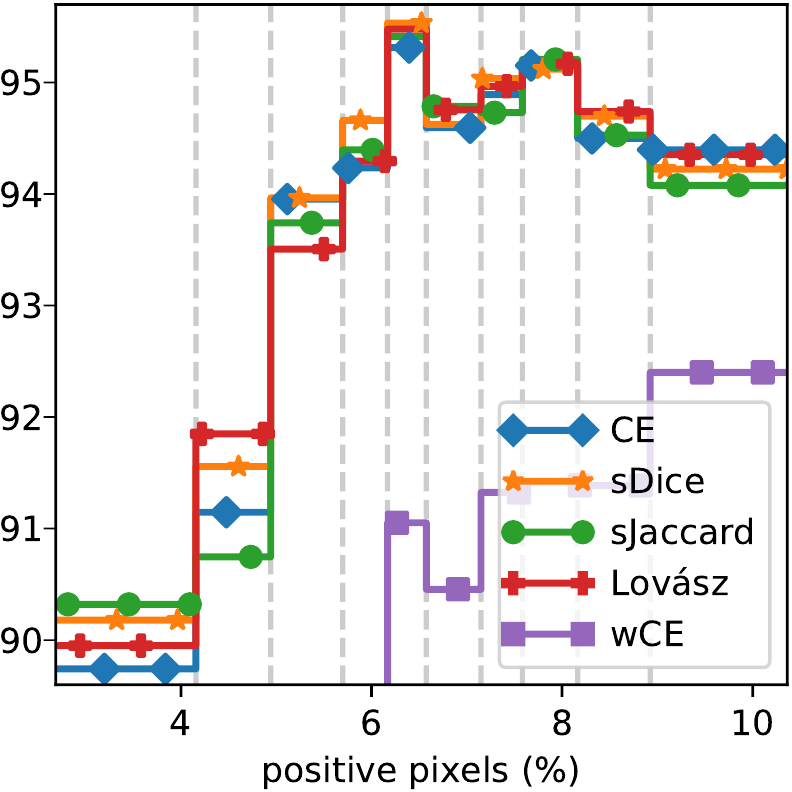}%
    \includegraphics[height=3cm]{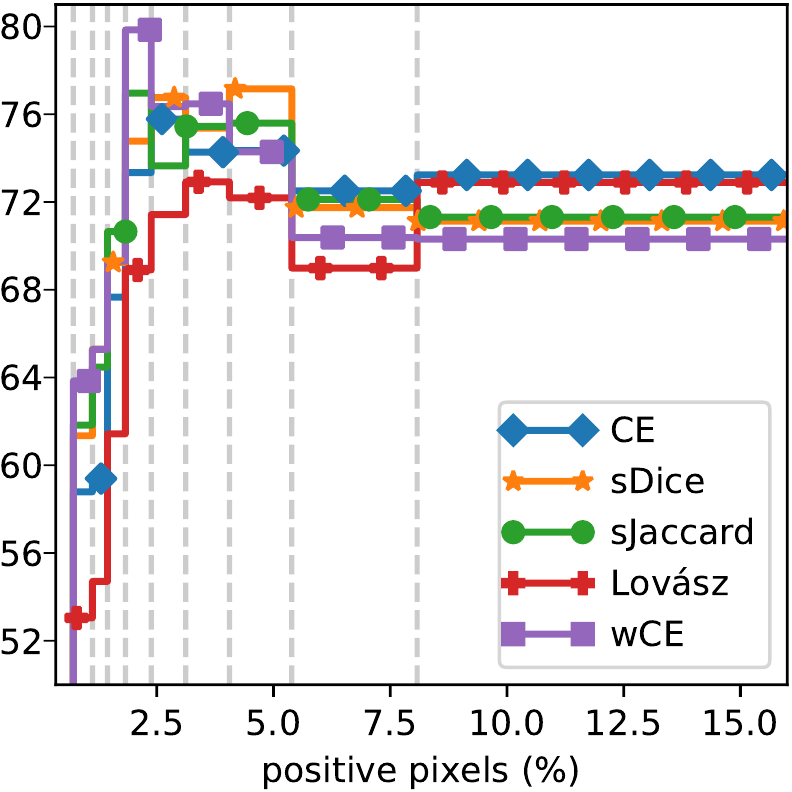}%
  }%
}%
\setlength{\foursubht}{\ht\foursubbox}%
  \centering%
\subcaptionbox{BRATS 2018\label{fig:areaBR}}{%
  \includegraphics[height=\foursubht]{histfig/BRATS_2018_decomp}%
}%
\subcaptionbox{ISLES 2018\label{fig:areaIS}}{%
  \includegraphics[height=\foursubht]{histfig/ISLES_2018_decomp}%
}%
\subcaptionbox{MO17\label{fig:areaMO}}{%
  \includegraphics[height=\foursubht]{histfig/38_decomp}%
}%
\subcaptionbox{PO18\label{fig:areaPO}}{%
  \includegraphics[height=\foursubht]{histfig/POLYPS_decomp}%
}%
  \caption{Dice score as a function of the relative ratio of foreground pixels for four datasets. 
  The scores are averaged within the 10 regions bordered by the dashed lines; each region contains 1/10 of the dataset. 
  Metric-sensitive losses perform as well or better than cross-entropy over most of the relative area ranges.
  ISLES 2017 omitted for lack of statistical relevance given its lower number of samples.\label{fig:dice_size}}%
\end{figure}

\section{Conclusion}
We compared optimization with five different loss functions from both theoretical and empirical perspectives. We find Jaccard and Dice approximate each other relatively and absolutely, while no approximation by a weighted Hamming similarity (i.e. a set theoretical equivalent for weighted cross-entropy) can be found. We confirm these findings empirically by evaluation on five medical segmentation tasks. We can show that there is generally no significant difference between the use of either of the metric-sensitive loss functions. Cross-entropy and its weighted version are however inferior to the latter when evaluated on Dice and Jaccard. This is in line with theory, which predicts that Jaccard controls the Dice loss. Nevertheless, the use of per-pixel losses remains highly popular. Of the 77 learning-based segmentation papers in the MICCAI 2018 proceedings that perform evaluation with Dice, 47 trained using a per-pixel loss.   The theory and empirical results presented here suggest that wider adoption of metric-sensitive losses like Dice and Jaccard is warranted. \\

\noindent
\textbf{Acknowledgements.}
This work is funded in part by Internal Funds KU Leuven (grant \# C24/18/047). The computational resources were partly provided by the Flemish Supercomputer Center (VSC). J.B. is part of NEXIS, a project that has received funding from the European Union's Horizon 2020 Research and Innovations Programme (grant \# 780026). R.B. is supported by FWO and Fujifilm. M.B.\ and M.B.B.\ acknowledge support from FWO (grant \# G0A2716N), an Amazon Research Award, an NVIDIA GPU grant, and the Facebook AI Research Partnership. The authors thank H. Willekens, C. Camps, C. Hassan, E. Coron, P. Bhandari, H. Neumann, O. Pech and A. Repici for their effort and collaboration.

\FloatBarrier

\bibliographystyle{splncs04}
\bibliography{biblio}

\end{document}